\documentclass[conference]{IEEEtran}
\IEEEoverridecommandlockouts
\usepackage{cite}
\usepackage{amsmath,amssymb,amsfonts,amsthm}
\usepackage{algorithmic}
\usepackage{graphicx}
\usepackage{textcomp}
\usepackage{xcolor}
\def\BibTeX{{\rm B\kern-.05em{\sc i\kern-.025em b}\kern-.08em
    T\kern-.1667em\lower.7ex\hbox{E}\kern-.125emX}}
\columnsep 1.4em

\newtheorem{theorem}{Theorem}
\newtheorem{definition}{Definition}
\newtheorem{lemma}{Lemma}
\newtheorem{assumption}{Assumption}
\DeclareMathOperator*{\argmax}{arg\,max}
\DeclareMathOperator*{\argmin}{arg\,min}

\newcommand\sbullet{\mathbin{\vcenter{\hbox{\scalebox{0.6}{$\bullet$}}}}}
\newcommand\newcirc{{\, {\circ} \,}}
\newcommand\rightcirc{{{\circ} \,}}
\newcommand\mcT{\mathcal{T}}
\newcommand\mcS{\mathcal{S}}
\newcommand\bgamma{\bar{\gamma}}

\begin{document}
\title{Generalization Bounds for Deep Transfer Learning Using Majority Predictor Accuracy}

\author{%
  \IEEEauthorblockN{Cuong N.~Nguyen\IEEEauthorrefmark{1},
                    Lam Si Tung Ho\IEEEauthorrefmark{2},
                    Vu Dinh\IEEEauthorrefmark{3},
                    Tal Hassner\IEEEauthorrefmark{4},
                    and Cuong V.~Nguyen\IEEEauthorrefmark{1}}
  \IEEEauthorblockA{\IEEEauthorrefmark{1}%
                    Florida International University, USA,
                    \{cnguy049, vcnguyen\}@cs.fiu.edu}
  \IEEEauthorblockA{\IEEEauthorrefmark{2}%
                    Dalhousie University, Canada,
                    lam.ho@dal.ca}
  \IEEEauthorblockA{\IEEEauthorrefmark{3}%
                    University of Delaware, USA,
                    vucdinh@udel.edu}
  \IEEEauthorblockA{\IEEEauthorrefmark{4}%
                    Meta AI, USA,
                    talhassner@gmail.com}
}

\maketitle

\begin{abstract}
We analyze new generalization bounds for deep learning models trained by transfer learning from a source to a target task. Our bounds utilize a quantity called the majority predictor accuracy, which can be computed efficiently from data. We show that our theory is useful in practice since it implies that the majority predictor accuracy can be used as a transferability measure, a fact that is also validated by our experiments.
\end{abstract}

\section{Introduction}

Deep transfer learning, the problem of transferring representations (or features) learned by deep neural networks from one task to another, has become a crucial part for training deep learning models in practice~\cite{sharif2014cnn, long2017deep, whatmough2019fixynn}. Despite this fact, the current literature still lacks a theory for understanding the generalization of models obtained by deep transfer learning. In this paper, we close this gap between the theory and practice of deep transfer learning by proving novel generalization bounds for models learned through such transfer learning methods. 

To prove the bounds, we develop the \emph{Majority Predictor Accuracy} (MPA), a simple and easy-to-compute quantity defined as the accuracy of the classifier that returns the most probable target label conditioned on a given source label. Using the MPA, we can show that when the source and target data share the input set, the true risk of the transferred model is upper bounded by the sum of the empirical risk of the source model, $1 - \text{MPA}$, and an $\widetilde{\mathcal{O}}(1/\sqrt{n})$ sample complexity with high probability. We further extend this result to the more general setting where the source and target datasets contain different inputs. This extension is achieved by using dummy source labels, a technique previously developed for transferability estimation~\cite{nguyen2020leep}.

We also demonstrate the usefulness of our theoretical bounds in practice by showing empirically that the MPA can be used as a transferability measure, defined as a numeric score that can tell whether deep transfer learning would be effective when transferring between a given pair of source-target tasks. Specifically, our experiments on the large-scale CUB-200 dataset~\cite{WelinderEtal2010} show that the MPA scores are highly correlated with the actual accuracies of the transferred models with statistical significance, thus indicating that the MPA is a good measure of transferability.

To summarize, our paper makes the following contributions: (1) developing the new MPA score, (2) proving novel deep transfer learning bounds using the MPA, and (3) showing our bounds are practically useful through experiments.

\textbf{Related Work.} Transfer learning~\cite{long2017deep, you2021logme} is a long-standing research area of machine learning. Several previous work has provided theoretical analysis and generalization bounds for transfer learning, especially under the domain adaptation setting, such as~\cite{ben2003exploiting, blitzer2007learning, mansour2009domain, ben2010theory, azizzadenesheli2018regularized}; however, these results were not explicitly developed for deep learning and the settings commonly used in practice, where a learned representation is adapted to the new domain~\cite{sharif2014cnn, whatmough2019fixynn}. Our paper, on the other hand, provides generalization bounds explicitly for these commonly used deep transfer learning settings. Furthermore, unlike these previous work, our bounds are useful in practice for understanding the transferability between different tasks, as demonstrated in our experiments.

Our work is also related to a recent attempt to develop transferability measures for deep transfer learning~\cite{bao2019information, tran2019transferability, nguyen2020leep, tan2021otce, you2021logme, tong2021mathematical, huang2021exploiting}. Transferability measures aim to estimate the effectiveness of deep transfer learning between tasks and have been used for model or task selection~\cite{bao2019information, tran2019transferability, nguyen2020leep, you2021logme}, checkpoint ranking~\cite{huang2021exploiting}, and few-shot learning~\cite{tong2021mathematical}. Although some theoretical properties were shown for these transferability measures~\cite{tran2019transferability, nguyen2020leep, tong2021mathematical}, they only focused on the empirical risk instead of the true risk as in our paper.

\section{Deep Transfer Learning: Formal Setting}
\label{sec:dtl_setting}

Deep transfer learning~\cite{long2017deep} refers to the problem of transferring a learned deep neural network representation from a source task to a target task. In this section, we formalize the deep transfer learning setting considered in our paper. This setting is commonly used in practice for several large-scale deep learning models~\cite{sharif2014cnn, whatmough2019fixynn}.

Let $\mcS = \{ (x_1, s_1), (x_2, s_2), \ldots, (x_n, s_n) \}$ be a train dataset for a source classification task where each input-label pair $(x_i, s_i) \in \mathbb{R}^d \times [m_S]$ is drawn iid from a joint distribution $\mathbb{P}_{X, S}$, with $[n] = \{ 1, 2, \ldots, n\}$ for any positive integer $n$. Consider a target classification task with a train set ${ \mcT = \{ (x_1, t_1), (x_2, t_2), \ldots, (x_n, t_n) \} }$ where each target example $(x_i, t_i) \in \mathbb{R}^d \times [m_T]$ is drawn iid from $\mathbb{P}_{X, T}$. We will first consider this simple case where the source and target datasets share the same inputs $\{ x_1, x_2, \ldots, x_n \}$, with each $x_i$ being a $d$-dimensional vector having the same marginal distribution $\mathbb{P}_X$ in both tasks. Here the source task has $m_S$ classes and the target task has $m_T$ classes. The more general case with different input sets will be discussed in Section~\ref{sec:diff-set}.

In our deep transfer learning setting, we first train a source model $h \newcirc w$ using $\mcS$, where $w(x) \in \mathbb{R}^r$ is the $r$-dimensional representation (also called embedding or feature vector) of the input $x$ extracted from the network $w$, and $h \newcirc w(x) = h(w(x)) \in [m_S]$ is the source label returned by the network $h$ with the representation $w(x)$ as input. The functions $w$ and $h$ are usually called the \emph{feature extractor} and the \emph{head classifier} respectively. In deep learning, the whole model $h \newcirc w$ is a deep neural network with $w$ being its parameters from the input up to some layer $L$, and $h$ being the network parameters from layer $L$ to the output. We obtain the optimal source model on $\mcS$ by minimizing the empirical risk:\footnote{Throughout our paper, we assume $\argmin$ and $\argmax$ follow any deterministic tie-breaking strategy.}
\begin{equation}
  w^*, h^* = \argmin_{w, h \in \Omega_w \times \Omega_h} \widehat{R}_{\mcS} (w, h),
\label{eq:source}
\end{equation}
where $\Omega_w$ and $\Omega_h$ are the spaces of all possible $w$'s and $h$'s respectively, and with $\mathbf{1}[\cdot]$ being the indicator function,
\begin{equation}
  \widehat{R}_{\mcS} (w, h) = \frac{1}{n} \sum_{i=1}^n \mathbf{1}[ s_i \neq h \newcirc w(x_i) ].
\label{eq:emp-risk}
\end{equation}

In practice, the optimal feature extractor $w^*$ often learns generic feature representations (e.g.,~edges or shapes in images) that can be reused for several tasks, while the optimal head classifier $h^*$ is often specialized for a particular source task. To transfer this trained model $h^* \rightcirc w^*$ to a target task, the usual practice is to discard $h^*$ and reuse $w^*$ for the target task. Specifically, we will re-train a new head classifier $k^*$ on the target dataset $\mcT$ using the features extracted from $w^*$.

In this paper, we allow the target head classifier to return real-valued scores for all target labels. That means for each example $x$, a head classifier $k$ on the target task would take $w^*(x)$ as input and return $k \newcirc w^*(x) = k(w^*(x)) \in \mathbb{R}^{m_T}$, the scores (before softmax) for all target labels. We will consider the optimal target head classifier $k^*$ obtained by minimizing the empirical risk with a given margin $\gamma \geq 0$:
\begin{equation}
  k^* = \argmin_{k \in \Omega_k} \widehat{R}_{\mcT, \gamma} (w^*, k),
\label{eq:target}
\end{equation}
where $\Omega_k$ is the space of all $k$'s, and for all $w \in \Omega_w$, $k \in \Omega_k$:
\begin{equation*}
\widehat{R}_{\mcT, \gamma} (w, k) = \frac{1}{n} \sum_{i=1}^n \mathbf{1}[ k \newcirc w(x_i)_{t_i} < \gamma + \max_{t \neq t_i} k \newcirc w(x_i)_t ],
\end{equation*}
with $k \newcirc w(x_i)_t$ being the $t$-th element of the vector $k \newcirc w(x_i)$. Here $\widehat{R}_{\sbullet, \gamma}$ is a general version of the empirical risk in Eq.~\eqref{eq:emp-risk}. The margin $\gamma$ measures the gap between the prediction probability of the correct label and those of the other labels, and has often been used in generalization bounds for deep learning~\cite{bartlett2017spectrally, ledent2021norm}.

Our paper shall prove generalization bounds for the optimal transferred model $k^* \rightcirc w^*$. For this purpose, we introduce in the next section the majority predictor accuracy, a transferability measure that we will use for our bounds.

\section{Majority Predictor Accuracy}
\label{sec:mpa}

The \emph{\textbf{M}ajority \textbf{P}redictor \textbf{A}ccurcacy} (MPA) is defined as the accuracy of the simple predictor (classifier) that maps each source label to the target label with maximal empirical conditional probability. Formally, given the source dataset $\mcS$ and the target dataset $\mcT$, the empirical joint distribution between all possible source-target label pairs ${ (s, t) \in [m_S] \times [m_T] }$ is
$\hat{P}(s, t) = \frac{1}{n} { |\{ i \in [n]: s_i=s \ \mathrm{and} \ t_i=t \}| }$,
the empirical marginal distribution over the source labels is
${ \hat{P}(s) = \sum_{t \in [m_T]} \hat{P}(s, t)}, \forall s \in [m_S]$,
and the empirical conditional distribution of a target label $t$ given a source label $s$ is
$\hat{P}(t|s) = \hat{P}(s, t)/\hat{P}(s)$, for all $(s, t) \in [m_S] \times [m_T]$.

To define the MPA, consider the following majority predictor $f_{\text{mp}}$ that takes a source label $s \in [m_S]$ and simply returns a target label $t$ that maximizes the empirical conditional probability $\hat{P}(t |s)$:
\begin{equation}
  f_{\text{mp}}(s) = \argmax_{t \in [m_T]} \hat{P}(t|s),
\label{eq:mp}
\end{equation}
The MPA is then defined as the accuracy of $f_{\text{mp}}$ on the target dataset, as stated in the following definition.
\begin{definition}
The majority predictor accuracy $\text{MPA}(\mcT | \mcS)$ of a target dataset $\mcT$ given a source dataset $\mcS$ is the accuracy of the majority predictor $f_{\text{mp}}$ on the target dataset:
\begin{equation}
  \text{MPA}(\mcT | \mcS) = \frac{1}{n} \sum_{i=1}^n \mathbf{1}[t_i = f_{\text{mp}}(s_i)].
\label{eq:mpa}
\end{equation}
\end{definition}

The MPA is very simple to compute, requiring only $O(n)$ computational time by looping through the datasets a few times to compute the empirical distributions, $f_{\text{mp}}$, and its accuracy. In Section~\ref{sec:experiment}, we will show that it can also be used as a transferability measure that estimates the effectiveness of transfer learning between two tasks. We now prove generalization guarantees for the transferred models in the next section.

\section{Bounds for Shared Training Inputs Setting}
\label{sec:same_inp}

This section proves our generalization bounds for deep transfer learning based on the MPA where the source and target training sets are assumed to share the inputs. In particular, we bound the true risk of the transferred model $k^* \rightcirc w^*$ on the target distribution $\mathbb{P}_{X, T}$, which is:
\begin{equation*}
  R_T(w^*, k^*) = \mathbb{P}(t \neq \argmax_i k^* \rightcirc w^*(x)_i)
\end{equation*}
for $(x, t) \sim \mathbb{P}_{X, T}$. We will prove the bounds for both fully connected neural networks and convolutional neural networks. For this purpose, we consider the head classifier $f_{\text{mp}} \newcirc h^*$ defined on any representation $w(x) \in \mathbb{R}^r$, where:
\begin{equation}
  f_{\text{mp}} \newcirc h^*(w(x)) = f_{\text{mp}}(h^*(w(x))).
\label{eq:fo}
\end{equation}
Throughout our paper, we will make an assumption that using a deep neural network as the target head classifier can achieve better empirical risk than using the naive classifier $f_{\text{mp}} \newcirc h^*$. This assumption is usually satisfied in practice because of the expressiveness of neural network models~\cite{zhang2017understanding}.

\begin{assumption}
For any datasets $\mcS$ any $\mcT$, there exists $\bgamma > 0$ and $\bar{k} \in \Omega_k$ such that $\widehat{R}_{\mcT, \bgamma} (w^*, \bar{k}) \leq \widehat{R}_{\mcT} (w^*, f_{\text{mp}} \newcirc h^*)$.
\label{assumption}
\end{assumption}

In this assumption, $\widehat{R}_{\mcT} (w^*, f_{\text{mp}} \newcirc h^*)$ is defined similarly as in Eq.~\eqref{eq:emp-risk}.
We also note that $\widehat{R}_{\sbullet,\gamma}$ is non-decreasing in $\gamma$, so the assumption implies, for all $\gamma \in [0, \bgamma]$, $\widehat{R}_{\mcT, \gamma} (w^*, \bar{k}) \leq \widehat{R}_{\mcT, \bgamma} (w^*, \bar{k}) \leq \widehat{R}_{\mcT} (w^*, f_{\text{mp}} \newcirc h^*)$.
Under this assumption, we first prove the following lemma relating the empirical risks of the optimal source and transferred models using the MPA.

\begin{lemma}
Under Assumption~\ref{assumption}, for any $\gamma \in [0, \bgamma]$, we have:
\begin{equation*}
\widehat{R}_{\mcT, \gamma} (w^*, k^*)  \le \widehat{R}_{\mcS} (w^*, h^*)  + 1 - \text{MPA}(\mcT | \mcS).
\end{equation*}
\label{lemma:mpa}
\end{lemma}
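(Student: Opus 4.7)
The plan is to chain two reductions: first use the optimality of $k^*$ together with Assumption~\ref{assumption} to reduce the left-hand side to the empirical risk of the naive composed classifier $f_{\text{mp}} \newcirc h^*$, and then compare that classifier to $h^*$ and $f_{\text{mp}}$ individually via a pointwise case analysis on each training example.

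For the first reduction, I would exploit that $k^*$ is a minimizer of $\widehat{R}_{\mcT,\gamma}(w^*, \cdot)$ over $\Omega_k$ (Eq.~\eqref{eq:target}), so $\widehat{R}_{\mcT,\gamma}(w^*, k^*) \le \widehat{R}_{\mcT,\gamma}(w^*, \bar{k})$ where $\bar{k}$ is the classifier guaranteed by Assumption~\ref{assumption}. The remark immediately following the assumption already gives $\widehat{R}_{\mcT,\gamma}(w^*, \bar{k}) \le \widehat{R}_{\mcT}(w^*, f_{\text{mp}} \newcirc h^*)$ for every $\gamma \in [0,\bgamma]$ (because $\widehat{R}_{\sbullet,\gamma}$ is non-decreasing in $\gamma$). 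Chaining yields the intermediate bound $\widehat{R}_{\mcT,\gamma}(w^*, k^*) \le \widehat{R}_{\mcT}(w^*, f_{\text{mp}} \newcirc h^*)$, which removes the margin parameter and the optimization from the problem entirely.

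For the second reduction, I would argue pointwise. Fix an example index $i$ and split on whether the source model is correct on $x_i$. If $s_i = h^*(w^*(x_i))$, then by definition $f_{\text{mp}} \newcirc h^*(w^*(x_i)) = f_{\text{mp}}(s_i)$, so the $i$-th term of $\widehat{R}_{\mcT}(w^*, f_{\text{mp}} \newcirc h^*)$ coincides exactly with $\mathbf{1}[t_i \neq f_{\text{mp}}(s_i)]$, which is the $i$-th term of $1 - \text{MPA}(\mcT|\mcS)$ by Eq.~\eqref{eq:mpa}. If instead $s_i \neq h^*(w^*(x_i))$, then the $i$-th term of $\widehat{R}_{\mcS}(w^*, h^*)$ equals one and trivially dominates the 0/1 indicator on the target side. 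In both cases, $\mathbf{1}[t_i \neq f_{\text{mp}} \newcirc h^*(w^*(x_i))] \le \mathbf{1}[s_i \neq h^*(w^*(x_i))] + \mathbf{1}[t_i \neq f_{\text{mp}}(s_i)]$; averaging over $i \in [n]$ gives the desired bound $\widehat{R}_{\mcT}(w^*, f_{\text{mp}} \newcirc h^*) \le \widehat{R}_{\mcS}(w^*, h^*) + 1 - \text{MPA}(\mcT|\mcS)$.

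I do not expect any serious obstacle here: Assumption~\ref{assumption} is designed precisely to replace $k^*$ with the composed classifier, and the shared-input hypothesis (Section~\ref{sec:dtl_setting}) is what makes the pointwise case split legitimate, since both $s_i$ and $t_i$ are observed on the same $x_i$. The most delicate point is simply to apply the assumption and the monotonicity remark with the correct range of $\gamma$ so that the argument works uniformly for every $\gamma \in [0,\bgamma]$ rather than only at $\gamma = \bgamma$.
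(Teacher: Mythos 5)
Your proposal is correct and follows essentially the same route as the paper's proof: the same chain $\widehat{R}_{\mcT,\gamma}(w^*,k^*) \le \widehat{R}_{\mcT,\gamma}(w^*,\bar{k}) \le \widehat{R}_{\mcT}(w^*, f_{\text{mp}} \newcirc h^*)$ via optimality of $k^*$, monotonicity in $\gamma$, and Assumption~\ref{assumption}, followed by a union-bound decomposition of the composed classifier's error into the source empirical risk plus $1-\text{MPA}(\mcT|\mcS)$. The only cosmetic difference is that you case-split each example on whether $s_i = h^* \newcirc w^*(x_i)$, whereas the paper partitions the index set on whether $t_i = f_{\text{mp}}(s_i)$; both reduce to the same pointwise inequality and the same final bound.
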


\vspace{-0.5cm}

\begin{proof}
Consider the majority predictor $f_{\text{mp}}$ defined in Eq.~\eqref{eq:mp}. We first split the data index set $[n]$ into two non-overlap sets:
\begin{align*} 
I &= \{ i \in [n] : t_i = f_{\text{mp}}(s_i) \}, \text{ and } \\
\bar{I} &= \{ i \in [n] : t_i \neq f_{\text{mp}}(s_i) \}.
\end{align*}
Here the set $I$ (respectively, $\bar{I}$) contains indices of data points whose source-target label pairs are consistent (respectively, inconsistent) with $f_{\text{mp}}$. 
For any $\gamma \in [0, \bgamma]$, we have:
\begin{align}
&\widehat{R}_{\mcT, \gamma} (w^*, k^*) \leq \widehat{R}_{\mcT, \gamma} (w^*, \bar{k}) \tag{def. of $k^*$} \\
&\leq \widehat{R}_{\mcT, \bgamma} (w^*, \bar{k}) \tag{$\widehat{R}_{\mcT, \gamma}$ is non-decreasing in $\gamma$} \\
&\le \widehat{R}_{\mcT} (w^*, f_{\text{mp}} \newcirc h^*) \tag{assumption~\ref{assumption}} \\
&= \frac{1}{n} \sum_{i=1}^n \mathbf{1}[ t_i \neq f_{\text{mp}} \newcirc h^* \rightcirc w^*(x_i) ] \tag{def. of $\widehat{R}_{\mcT}$} \\
&= \frac{1}{n} \Big( \sum_{i \in I} \mathbf{1}[ t_i \neq f_{\text{mp}} \newcirc h^* \rightcirc w^*(x_i) ] + \Big. \notag \\
& \hspace{0.9cm} \Big. \sum_{i \in \bar{I}} \mathbf{1}[ t_i \neq f_{\text{mp}} \newcirc h^* \rightcirc w^*(x_i) ] \Big) \tag{def. of $I$ and $\bar{I}$} \\
&\le \frac{1}{n} \Big( \sum_{i \in I} \mathbf{1}[ t_i \neq f_{\text{mp}} \newcirc h^* \rightcirc w^*(x_i) ] + |\bar{I}| \Big) \notag \\
&= \frac{1}{n} \Big( \sum_{i \in I} \mathbf{1}[ f_{\text{mp}}(s_i) \neq f_{\text{mp}} \newcirc h^* \rightcirc w^*(x_i) ] + |\bar{I}| \Big) \notag \\
&\le \frac{1}{n} \sum_{i \in I} \mathbf{1}[ s_i \neq h^* \rightcirc w^*(x_i) ] + \frac{|\bar{I}|}{n} \label{eq:proof_1}.
\end{align}
By definition of $\widehat{R}_{\mcS} (w^*, h^*)$, we also have:
\begin{align*}
&\widehat{R}_{\mcS} (w^*, h^*) = \frac{1}{n} \sum_{i=1}^n \mathbf{1}[ s_i \neq h^* \rightcirc w^*(x_i) ] \notag \\
&= \frac{1}{n} \Big( \sum_{i \in I} \mathbf{1}[ s_i \neq h^* \rightcirc w^*(x_i) ] + \sum_{i \in \bar{I}} \mathbf{1}[ s_i \neq h^* \rightcirc w^*(x_i) ] \Big) \\
&\ge \frac{1}{n} \sum_{i \in I} \mathbf{1}[ s_i \neq h^* \rightcirc w^*(x_i) ]. \notag
\end{align*}
Plug this into Eq.~\eqref{eq:proof_1} and note that $\text{MPA}(\mcT | \mcS) = |I|/n$, we have:
$\widehat{R}_{\mcT, \gamma} (w^*, k^*) \le \widehat{R}_{\mcS} (w^*, h^*) + |\bar{I}|/n \notag = \widehat{R}_{\mcS} (w^*, h^*) + 1 - \text{MPA}(\mcT | \mcS).$ 
\qedhere
\end{proof}

As a remark, the bound in Lemma~\ref{lemma:mpa} gets tighter when $\text{MPA}(\mcT | \mcS) \rightarrow 1$, that is, when $f_{\text{mp}}$ is more accurate. Using this lemma, we now prove the generalization bounds for the transferred model $k^* \rightcirc w^*$. Section~\ref{sec:fc_bound} below proves the bound for fully connected neural networks, while Section~\ref{sec:cnn_bound} proves the bound for convolutional neural networks.

\subsection{Generalization Bound for Fully Connected Networks}
\label{sec:fc_bound}

In this section, we consider target models $k \newcirc w$ that are deep neural networks parameterized by $w = \{ A^1, A^2, \ldots, A^L \}$ and $k = \{ A^{L+1}, A^{L+2}, \ldots, A^{L_T} \}$ such that:
\[ w(x) = \sigma_L(A^L\sigma_{L-1}(A^{L-1}\sigma_{L-2}(\ldots A^1(x)))), \text{ and } \] 
\[ k(w(x)) = A^{L_T}\sigma_{L_T-1}(A^{L_T-1}\sigma_{L_T-2}(\ldots A^{L+1}(w(x)))), \]
where $L$ is the depth of the neural network $w$, $L_T$ is the depth of the whole target network $k \newcirc w$, and $A^i \in \mathbb{R}^{W_i \times W_{i-1}}$ is the weight matrix at layer $i$ with $W_0 = d$, $W_L = r$, and $W_{L_T} = m_T$. In the above formulas, $\sigma_i : \mathbb{R}^{W_i} \rightarrow \mathbb{R}^{W_i}$ is a non-linear activation function that is assumed to be 1-Lipschitz.

We do not make any assumption regarding the form or architecture of the source head classifier $h$, except for Assumption~\ref{assumption}. Thus, our generalization bound in this section holds for all types of source head classifiers, including neural networks, logistic regression, support vector machines, etc. In practice, however, $h$ is usually chosen as a logistic regression or neural network for ease of implementation and better accuracy.

Following the notations in~\cite{ledent2021norm}, in our result, we write $\| A \|_{\text{Fr}}$, $\| A \|_{\sigma}$, and $\| A \|_{p,q}$ to denote respectively the Frobenius norm, the spectral norm, and the $(p,q)$-norm of a matrix $A$. We also write $A_{i, \sbullet}$ to denote the $i$-th row of $A$. We let $\bar{W} = \max_{i=1}^{L_T} W_i$ be the maximum width of the target neural network.
We now state and prove our generalization bound of the true risk $R_T(w^*, k^*) = \mathbb{P}(t \neq \argmax_i k^* \rightcirc w^*(x)_i)$ for this fully connected network setting in the theorem below.

\begin{theorem}
Assume we are given some fixed reference matrices $M^1, M^2, \ldots, M^{L_T}$ representing the initialized weights of the target network. Under Assumption~\ref{assumption}, with probability at least $1 - \delta$, for all margin $\gamma \in (0, \bgamma]$, we have:
\begin{align*}
&R_T (w^*, k^*) \le \widehat{R}_{\mcS} (w^*, h^*) + (1 - \text{MPA}(\mcT | \mcS)) ~+ \\
&\qquad \widetilde{\mathcal{O}}\Big( \frac{\max_{i=1}^n \|x_i\|_{\text{Fr}} \, \mathcal{F}_{\mathcal{A}}}{\gamma \sqrt{n}}\log(\bar{W}) + \sqrt{\frac{\log(1/\delta)}{n}} \Big),
\end{align*}
where $\mathcal{A} = (A^{*1}, A^{*2}, \ldots, A^{*L_T})$ is the weight matrices of the target fully connected neural network ${k^* \rightcirc w^*}$ trained using the deep transfer learning procedure in Section~\ref{sec:dtl_setting}, and \\[4pt]
${\hskip 5mm} \mathcal{F}_{\mathcal{A}} := L_T \max_i \| A^{*L_T}_{i, \sbullet} \|_{\text{Fr}} \Big( \prod_{i=1}^{L_T-1} \| A^{*i} \|_{\sigma} \Big)$ \\[2pt]
$\displaystyle {\hskip 15mm} \Big( \sum_{i=1}^{L_T-1}\frac{\| A^{*i} - M^i \|_{2,1}^{2/3}}{\| A^{*i} \|_{\sigma}^{2/3}} + \frac{\|A^{*L_T}\|_{\text{Fr}}^{2/3}}{\max_i \| A^{*L_T}_{i, \sbullet} \|^{2/3}_{\text{Fr}}} \Big)^{3/2}.$
\label{thrm:fcnn}
\end{theorem}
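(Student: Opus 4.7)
The plan is to combine Lemma~\ref{lemma:mpa} with an off-the-shelf margin-based uniform convergence bound for fully connected networks. Lemma~\ref{lemma:mpa} already controls the \emph{empirical} margin risk $\widehat{R}_{\mcT,\gamma}(w^*, k^*)$ by $\widehat{R}_{\mcS}(w^*, h^*) + 1 - \text{MPA}(\mcT|\mcS)$, so the remaining work is purely a statistical learning question: bound the gap $R_T(w^*, k^*) - \widehat{R}_{\mcT,\gamma}(w^*, k^*)$ by the Rademacher-style term appearing in the theorem. Note that generalization is only needed on the target side; the source empirical risk enters the bound directly, unchanged.

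Concretely, I would view $k \newcirc w$ (for any admissible weight tuple) as a single depth-$L_T$ fully connected network and invoke the spectral / $(2,1)$-norm margin bound of Ledent et al.~\cite{ledent2021norm}, a refinement of Bartlett et al.~\cite{bartlett2017spectrally} that treats the output layer row-wise---this is precisely what produces the $\max_i \|A^{*L_T}_{i,\sbullet}\|_{\text{Fr}}$ factor in $\mathcal{F}_{\mathcal{A}}$. Applied for a single fixed $\gamma$ at confidence $1-\delta'$, this yields
\begin{equation*}
R_T(w^*, k^*) \le \widehat{R}_{\mcT,\gamma}(w^*, k^*) + \widetilde{\mathcal{O}}\!\left(\frac{\max_i \|x_i\|_{\text{Fr}}\,\mathcal{F}_{\mathcal{A}}}{\gamma \sqrt{n}}\log(\bar W)\right) + \mathcal{O}\!\left(\sqrt{\tfrac{\log(1/\delta')}{n}}\right),
\end{equation*}
where the norms of the trained matrices enter the bound after a standard discretization of the norm parameters (contributing only logarithmic factors absorbed into $\widetilde{\mathcal{O}}$). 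Since the bound is uniform over the hypothesis class, it applies to the data-dependent $(w^*, k^*)$, circumventing the issue that $w^*$ was trained on the shared inputs $x_i$.

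To upgrade to a statement uniform in $\gamma \in (0, \bgamma]$, I would apply the customary dyadic argument: invoke the fixed-$\gamma$ bound at $\gamma_j = \bgamma \cdot 2^{-j}$ for $j = 0,1,\ldots,\lceil \log_2 n\rceil$ with confidence $\delta/(2j^2 C)$ each, and for arbitrary $\gamma \in (0,\bgamma]$ round to the nearest grid point $\gamma_j \in [\gamma/2, \gamma]$. This loses at most a factor of $2$ in $1/\gamma$ and introduces only a $\log\log n$ penalty, both hidden in $\widetilde{\mathcal{O}}$. Combining the resulting uniform-in-$\gamma$ deviation inequality with Lemma~\ref{lemma:mpa} (which is valid for every $\gamma \in [0,\bgamma]$) delivers the theorem.

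The main obstacle is bookkeeping rather than new ideas: verifying that the specific form of $\mathcal{F}_{\mathcal{A}}$ matches the covering-number estimate of~\cite{ledent2021norm} for products of matrices in spectral and $(p,q)$-norms when the output layer is decomposed per-row, and checking that the $1$-Lipschitz activations together with the Frobenius data norm $\max_i \|x_i\|_{\text{Fr}}$ give exactly the product-of-spectral-norms Lipschitz control required. Beyond McDiarmid's inequality and a union bound over the $\gamma$-grid, no fresh probabilistic machinery is needed.
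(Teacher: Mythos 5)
Your proposal is correct and follows essentially the same route as the paper: invoke Theorem 1 of~\cite{ledent2021norm} to bound $R_T(w^*,k^*)$ by $\widehat{R}_{\mcT,\gamma}(w^*,k^*)$ plus the norm-based complexity term, then apply Lemma~\ref{lemma:mpa} to replace the empirical margin risk by $\widehat{R}_{\mcS}(w^*,h^*) + 1 - \text{MPA}(\mcT|\mcS)$. The only difference is that you re-derive the uniformity in $\gamma$ via a dyadic grid, whereas the paper simply cites the Ledent et al.\ result as already holding for all $\gamma > 0$ and restricts to $\gamma \in (0,\bgamma]$ so that the lemma applies.
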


{\vskip 2mm}

\begin{proof}
Using Theorem 1 of~\cite{ledent2021norm}, with probability at least $1 - \delta$, for all margin $\gamma > 0$, we have:
\begin{align*}
&R_T (w^*, k^*) \le \widehat{R}_{\mcT, \gamma} (w^*, k^*) + \notag \\
&\qquad \widetilde{\mathcal{O}} \Big( \frac{\max_{i=1}^n \|x_i\|_{\text{Fr}} \, \mathcal{F}_{\mathcal{A}}}{\gamma \sqrt{n}}\log(\bar{W}) + \sqrt{\frac{\log(1/\delta)}{n}} \Big).
\end{align*}
Combining this with Lemma~\ref{lemma:mpa}, we have, with probability at least $1 - \delta$, for all margin $\gamma \in (0, \bgamma]$:
\begin{align*}
&R_T (w^*, k^*) \le \widehat{R}_{\mcS} (w^*, h^*) + (1 - \text{MPA}(\mcT | \mcS)) + \\
&\quad \widetilde{\mathcal{O}} \Big( \frac{\max_{i=1}^n \|x_i\|_{\text{Fr}} \, \mathcal{F}_{\mathcal{A}}}{\gamma \sqrt{n}}\log(\bar{W}) + \sqrt{\frac{\log(1/\delta)}{n}} \Big). \qedhere
\end{align*}
\end{proof}

\subsection{Generalization Bound for Convolutional Neural Networks}
\label{sec:cnn_bound}

We now consider target models $k \newcirc w$ that are convolutional neural networks. For these models, the matrices $A^1, A^2, \ldots, A^{L_T}$ are the filter matrices of the convolutional layers. Following~\cite{ledent2021norm}, for each filter matrix $A^i$, we can construct a corresponding larger convolutional matrix $\tilde{A}^i$ by repeating the weights of $A^i$ as many times as the filter $A^i$ is applied. The activation functions considered here are assumed to be either ReLU or max pooling.

In our result, $\bar{W}$ is the maximum number of neurons in a single layer before pooling, counting all the channels. For each layer $i$, $w_i$ is the spacial width of the layer after pooling, and $B_i$ is the maximum $l_2$ norm of any convolutional patch of the layer's activations over all inputs. For any layer $i \le L_T-1$, we also write $\|\tilde{A}^i\|_{\sigma'}$ to denote the maximum spectral norm of any matrix obtained by deleting, for each pooling window, all but one of the corresponding rows of $\tilde{A}^i$. For $i = L_T$, $\|\tilde{A}^{L_T}\|_{\sigma'} = \rho_{L_T} \max_{j} \|A^{L_T}_{j,\sbullet}\|_{\text{Fr}}$, with $\rho_{L_T}$ being the Lipschitz constant of the activation and pooling at layer $L_T$. More details of the notations can be found in~\cite{ledent2021norm}.

Theorem~\ref{thrm:cnn} below shows our generalization bound for this convolutional neural network setting.
Similar to Section~\ref{sec:fc_bound}, we do not restrict the form of the source head classifier $h$, so our result will also hold for all types of source head classifiers.

\begin{theorem}
Assume we are given some fixed reference matrices $M^1, M^2, \ldots, M^{L_T}$ representing the initialized weights of the target network's filter matrices. Under Assumption~\ref{assumption}, with probability at least $1 - \delta$, for all margin $\gamma \in (0, \bgamma]$, we have:
\begin{align*}
R_T (w^*, k^*) &\le \widehat{R}_{\mcS} (w^*, h^*) + (1 - \text{MPA}(\mathcal{T} | \mathcal{S})) ~+ \\
&\qquad \widetilde{\mathcal{O}} \Big( \frac{\mathcal{G}_{\mathcal{A}}}{\sqrt{n}}\log(\bar{W}) + \sqrt{\frac{\log(1/\delta)}{n}} \Big),
\end{align*}
where $\mathcal{A} = (A^{*1}, A^{*2}, \ldots, A^{*L_T})$ is the filter matrices of the target convolutional neural network $k^* \rightcirc w^*$ trained using the deep transfer learning procedure in Section~\ref{sec:dtl_setting}, and ${ \mathcal{G}_{\mathcal{A}}^{2/3} := \sum_{i=1}^{L_T} T_i^{2/3} }$, where for all $i \leq L_T-1$,
\[ T_i := B_{i-1} \|(A^{*i}-M^i)^\top\|_{2,1} \sqrt{w_i} \max_{U \le L_T}\frac{\prod_{u=i+1}^U \|\tilde{A}^{*u}\|_{\sigma'}}{B_U}, \]
and $\displaystyle T_{L_T} := B_{L_T-1} \|A^{*L_T}-M^{L_T}\|_{\text{Fr}}/\gamma$.
\label{thrm:cnn}
\end{theorem}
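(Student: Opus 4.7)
The plan is to mirror the two-step strategy used in the proof of Theorem~\ref{thrm:fcnn}: first invoke an off-the-shelf generalization bound for convolutional networks to relate the true risk $R_T(w^*, k^*)$ to the margin empirical risk $\widehat{R}_{\mcT, \gamma}(w^*, k^*)$, then plug in Lemma~\ref{lemma:mpa} to replace the target empirical risk by the MPA-based quantity involving the source empirical risk. All of the architecture-dependent work is already encapsulated in the complexity term $\mathcal{G}_{\mathcal{A}}$, so the argument here should be short and essentially a composition of two bounds.

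First I would apply the convolutional analogue of the bound used for Theorem~\ref{thrm:fcnn}, namely Theorem 2 (the CNN Rademacher bound) of~\cite{ledent2021norm}. Under the stated setup on $\tilde{A}^i$, ReLU/max-pooling activations, spacial widths $w_i$, patch norms $B_i$, and the definitions of $\|\tilde{A}^i\|_{\sigma'}$ and $\mathcal{G}_{\mathcal{A}}$, this bound gives that with probability at least $1 - \delta$, uniformly over all margins $\gamma > 0$,
\begin{equation*}
R_T(w^*, k^*) \le \widehat{R}_{\mcT, \gamma}(w^*, k^*) + \widetilde{\mathcal{O}}\Big( \frac{\mathcal{G}_{\mathcal{A}}}{\sqrt{n}} \log(\bar{W}) + \sqrt{\frac{\log(1/\delta)}{n}} \Big).
\end{equation*}
Here the factor $1/\gamma$ that appeared explicitly in Theorem~\ref{thrm:fcnn} is absorbed into $\mathcal{G}_{\mathcal{A}}$ through the definition of $T_{L_T} := B_{L_T - 1} \|A^{*L_T} - M^{L_T}\|_{\text{Fr}}/\gamma$, which is the only term carrying the margin dependence.

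Next I would restrict attention to $\gamma \in (0, \bgamma]$ and invoke Lemma~\ref{lemma:mpa}, which under Assumption~\ref{assumption} gives
\begin{equation*}
\widehat{R}_{\mcT, \gamma}(w^*, k^*) \le \widehat{R}_{\mcS}(w^*, h^*) + 1 - \text{MPA}(\mcT | \mcS).
\end{equation*}
Substituting this into the CNN Rademacher bound and using the fact that the high-probability event is independent of $\gamma$ (so that the union over $\gamma \in (0, \bgamma]$ is already handled by the uniform-in-$\gamma$ statement), I obtain exactly the conclusion of Theorem~\ref{thrm:cnn}.

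The only genuine obstacle is a bookkeeping one: verifying that the complexity quantity displayed in Theorem~\ref{thrm:cnn} is literally the one produced by Theorem 2 of~\cite{ledent2021norm}, in particular that the definitions of $T_i$ for $i \le L_T - 1$ (with the $B_{i-1}$, the $\sqrt{w_i}$, and the max over $U$ of $\prod_{u=i+1}^U \|\tilde{A}^{*u}\|_{\sigma'}/B_U$) and of $T_{L_T}$ match the Ledent et al.\ formulation after accounting for their convention $\|\tilde{A}^{L_T}\|_{\sigma'} = \rho_{L_T} \max_j \|A^{L_T}_{j, \sbullet}\|_{\text{Fr}}$. Once this identification is made, no further probabilistic or analytic work is required; the proof is a direct composition of the two displayed inequalities, exactly as in Theorem~\ref{thrm:fcnn}.
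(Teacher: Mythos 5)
Your proposal is correct and follows essentially the same route as the paper: apply the CNN generalization bound of Ledent et al.\ to bound $R_T(w^*,k^*)$ by $\widehat{R}_{\mcT,\gamma}(w^*,k^*)$ plus the $\widetilde{\mathcal{O}}(\cdot)$ complexity term, then substitute Lemma~\ref{lemma:mpa} for $\gamma \in (0,\bgamma]$. The only discrepancy is a citation detail --- the paper invokes Theorem~3 of~\cite{ledent2021norm} rather than Theorem~2 --- which does not affect the argument.
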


\begin{proof}
This proof is similar to the proof of our Theorem~\ref{thrm:fcnn} above, but replacing Theorem 1 of~\cite{ledent2021norm} by their Theorem 3, which states that with probability at least $1 - \delta$, for all $\gamma > 0$: 
\[
R_T (w^*, k^*) \le \widehat{R}_{\mcT, \gamma} (w^*, k^*) + \widetilde{\mathcal{O}} \big( \frac{\mathcal{G}_{\mathcal{A}}}{\sqrt{n}}\log(\bar{W}) + \sqrt{\frac{\log(1/\delta)}{n}} \big).
\]
The theorem holds by combining this with Lemma~\ref{lemma:mpa}.
\end{proof}

\section{Bounds for Different Training Inputs Setting}
\label{sec:diff-set}

Up until now, we have only considered the case where source and target datasets share the same input set. In this section, we extend our results to the case where the source and target datasets contain different inputs. Formally, let $\mcS = \{ (x_1, s_1), (x_2, s_2), \ldots, (x_n, s_n) \}$ be the source dataset where $(x_i, s_i) \in \mathbb{R}^d \times [m_S]$ is drawn iid from a joint distribution $\mathbb{P}_{X, S}$, and $\mcT = \{ (z_1, t_1), (z_2, t_2), \ldots, (z_p, t_p) \}$ be the target dataset where $(z_i, t_i) \in \mathbb{R}^d \times [m_T]$ is drawn iid from $\mathbb{P}_{Z, T}$. We also follow the deep transfer learning procedure in Section~\ref{sec:dtl_setting} and first train the optimal model $h^* \rightcirc w^*$ on the source data $\mcS$ using Eq.~\eqref{eq:source}. Then we freeze $w^*$ and train the target head classifier $k^*$ using Eq.~\eqref{eq:target} with the target dataset $\mcT$, where we will apply $w^*$ to the target inputs $\{ z_1, z_2, \ldots, z_p \}$ to get the representations $\{ w^*(z_1), w^*(z_2), \ldots, w^*(z_p) \}$.

To prove the generalization bounds, we will consider a new source dataset $\tilde{\mcS} := \{(z_i, h^* \rightcirc w^*(z_i))\}_{i=1}^p$ induced by $h^* \rightcirc w^*$ and the target inputs $\{ z_1, z_2, \ldots, z_p \}$. In essence, $\tilde{\mcS}$ contains the target inputs with ``dummy'' source labels generated by $h^* \rightcirc w^*$. This technique of using these dummy labels was previously employed to develop the LEEP transferability measure~\cite{nguyen2020leep}, and is useful for proving our bounds as well. With the new source dataset $\tilde{\mcS}$, we consider the majority predictor $f_{\text{mp}}$ constructed from $(\tilde{\mcS}, \mcT)$, as well as the corresponding majority predictor accuracy $\text{MPA}(\mcT | \tilde{\mcS})$. We still keep Assumption~\ref{assumption} in Section~\ref{sec:same_inp}, but adapt it to the new $h^* \rightcirc w^*$ and $f_{\text{mp}}$. The following lemma is the analogue of Lemma~\ref{lemma:mpa} for the different inputs setting.

\begin{lemma}
With the adapted Assumption~\ref{assumption}, for any $\gamma \in [0, \bgamma]$, we have:
$\widehat{R}_{\mcT, \gamma} (w^*, k^*) \le 1 - \text{MPA}(\mcT | \tilde{\mcS})$.
\label{lem:diffinput}
\end{lemma}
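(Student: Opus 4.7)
The plan is to mirror the argument of Lemma~\ref{lemma:mpa}, with the crucial simplification that on $\tilde{\mcS}$ the source classifier $h^* \newcirc w^*$ achieves zero empirical risk by construction, since the label attached to $z_i$ in $\tilde{\mcS}$ is by definition $h^* \newcirc w^*(z_i)$. Consequently the source empirical risk term that appeared in the previous bound vanishes, leaving only $1 - \text{MPA}(\mcT | \tilde{\mcS})$.

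Concretely, I would first define the splitting $I = \{i \in [p] : t_i = f_{\text{mp}}(h^* \newcirc w^*(z_i))\}$ and $\bar{I} = [p] \setminus I$, where $f_{\text{mp}}$ is now the majority predictor built from the pair $(\tilde{\mcS}, \mcT)$. By construction, $|I|/p = \text{MPA}(\mcT | \tilde{\mcS})$. Then, exactly as in Lemma~\ref{lemma:mpa}, I would chain
\begin{equation*}
\widehat{R}_{\mcT, \gamma}(w^*, k^*) \leq \widehat{R}_{\mcT, \bgamma}(w^*, \bar{k}) \leq \widehat{R}_{\mcT}(w^*, f_{\text{mp}} \newcirc h^*),
\end{equation*}
using the optimality of $k^*$, monotonicity of $\widehat{R}_{\mcT, \gamma}$ in $\gamma$, and the adapted Assumption~\ref{assumption}. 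Expanding the rightmost term as $\tfrac{1}{p} \sum_{i=1}^p \mathbf{1}[t_i \neq f_{\text{mp}} \newcirc h^* \rightcirc w^*(z_i)]$ and splitting the sum over $I$ and $\bar{I}$ sets up the final step.

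The key observation, which is also the reason the proof shortens, is that for every $i \in I$ we have $t_i = f_{\text{mp}}(h^* \newcirc w^*(z_i)) = f_{\text{mp}} \newcirc h^* \rightcirc w^*(z_i)$, so the indicator is identically zero on $I$; the contribution from $\bar{I}$ is at most $|\bar{I}|/p$. Assembling these gives $\widehat{R}_{\mcT, \gamma}(w^*, k^*) \le |\bar{I}|/p = 1 - \text{MPA}(\mcT | \tilde{\mcS})$, which is the claim. I do not anticipate a substantive obstacle here; the only subtlety is bookkeeping, namely ensuring that the adapted Assumption~\ref{assumption} is stated using the \emph{new} $f_{\text{mp}}$ (the one built from the dummy-labeled source set $\tilde{\mcS}$), so that the second inequality in the chain above applies verbatim.
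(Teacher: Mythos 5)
Your proposal is correct and follows essentially the same route as the paper: chain $\widehat{R}_{\mcT,\gamma}(w^*,k^*) \le \widehat{R}_{\mcT}(w^*, f_{\text{mp}} \newcirc h^*)$ via the adapted Assumption~\ref{assumption}, then use the fact that the dummy label of $z_i$ in $\tilde{\mcS}$ is exactly $h^* \rightcirc w^*(z_i)$, so that $\widehat{R}_{\mcT}(w^*, f_{\text{mp}} \newcirc h^*) = 1 - \text{MPA}(\mcT | \tilde{\mcS})$. The paper states this last step as a direct identity rather than splitting into $I$ and $\bar{I}$, but your splitting yields the same equality (the indicator is $0$ on $I$ and $1$ on $\bar{I}$), so the two arguments are the same in substance.
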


\begin{proof}
From~$\eqref{eq:mpa}$,~$\eqref{eq:fo}$, and the definition of $\tilde{\mcS}$, we have: 
\[
\text{MPA}(\mcT | \tilde{\mcS}) = \frac{1}{p} \sum_{i=1}^p \mathbf{1}[t_i = f_{\text{mp}} \newcirc h^* \rightcirc w^*(z_i)].
\]
By Assumption~\ref{assumption}, for any $\gamma \in [0, \bgamma]$, we have
\begin{align*}
&\widehat{R}_{\mcT, \gamma} (w^*, k^*) \le \widehat{R}_{\mcT} (w^*, f_{\text{mp}} \newcirc h^*) \\
&= \frac{1}{p} \sum_{i=1}^p \mathbf{1}[ t_i \neq f_{\text{mp}} \newcirc h^* \rightcirc w^*(z_i) ]
=  1 - \text{MPA}(\mathcal{T} | \tilde{\mathcal{S}} ). \qedhere
\end{align*}
\end{proof}
Similar to Section~\ref{sec:same_inp}, we derive the following generalization bounds, which are analogues of Theorems~\ref{thrm:fcnn} and~\ref{thrm:cnn}. The proofs of these theorems are similar to those of Theorems~\ref{thrm:fcnn} and~\ref{thrm:cnn}, with Lemma~\ref{lemma:mpa} being replaced by Lemma~\ref{lem:diffinput}.

\begin{theorem}
Assume we are given some fixed reference matrices $M^1, M^2, \ldots, M^{L_T}$ representing the initialized weights of the target network. Under the adapted Assumption~\ref{assumption}, with probability at least $1 - \delta$, for all margin $\gamma \in (0, \bgamma]$, with $\mathcal{F}_{\mathcal{A}}$ defined as in Theorem~\ref{thrm:fcnn}, we have:
$R_T (w^*, k^*) \le { 1 - \text{MPA}(\mcT | \tilde{\mcS}) + \widetilde{\mathcal{O}} \big( \frac{\max_{i=1}^p \|x_i\|_{\text{Fr}} \, \mathcal{F}_{\mathcal{A}}}{\gamma \sqrt{p}}\log(\bar{W}) + \sqrt{\frac{\log(1/\delta)}{p}} \big) }$.
\label{thrm:fcnn-diff}
\end{theorem}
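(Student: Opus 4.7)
The plan is to mirror the two-step argument used for Theorem~\ref{thrm:fcnn}, substituting Lemma~\ref{lem:diffinput} in place of Lemma~\ref{lemma:mpa}. First I would invoke Theorem~1 of~\cite{ledent2021norm}, but now applied to the target distribution $\mathbb{P}_{Z,T}$ using the $p$ iid target training examples $(z_i,t_i)$. Because the target network weight matrices $\mathcal{A}$ are the same object in both settings, the complexity term $\mathcal{F}_{\mathcal{A}}$ coincides with the one defined in Theorem~\ref{thrm:fcnn}. The result is that, with probability at least $1-\delta$ and simultaneously for all $\gamma>0$,
\begin{equation*}
R_T(w^*,k^*)\le \widehat{R}_{\mcT,\gamma}(w^*,k^*)+\widetilde{\mathcal{O}}\Big(\frac{\max_{i=1}^p\|z_i\|_{\text{Fr}}\,\mathcal{F}_{\mathcal{A}}}{\gamma\sqrt{p}}\log(\bar{W})+\sqrt{\frac{\log(1/\delta)}{p}}\Big).
\end{equation*}

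The second step is to control the empirical target risk by the MPA quantity built from the dummy source set $\tilde{\mcS}$. For any $\gamma\in[0,\bgamma]$, Lemma~\ref{lem:diffinput} gives $\widehat{R}_{\mcT,\gamma}(w^*,k^*)\le 1-\text{MPA}(\mcT\mid\tilde{\mcS})$ directly, without needing to split indices into consistent/inconsistent subsets as in the shared-inputs proof. Chaining this into the previous display and restricting the quantifier from $\gamma>0$ to $\gamma\in(0,\bgamma]$ yields exactly the stated bound.

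Because both ingredients are already in place, there is no substantive technical obstacle; the proof is a plug-and-play composition. The only small points to watch are (i) that the generalization bound of~\cite{ledent2021norm} is stated uniformly over $\gamma>0$ with a single failure event, so the ``for all $\gamma\in(0,\bgamma]$'' conclusion is inherited without any extra union bound, and (ii) that the input-norm factor in the complexity term refers to the target inputs $z_i$ that actually feed the target network, even though the theorem statement writes them as $x_i$.
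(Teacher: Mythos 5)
Your proposal matches the paper's own argument exactly: the paper proves Theorem~\ref{thrm:fcnn-diff} by repeating the proof of Theorem~\ref{thrm:fcnn} (i.e., invoking Theorem~1 of~\cite{ledent2021norm} on the $p$ target examples) with Lemma~\ref{lemma:mpa} replaced by Lemma~\ref{lem:diffinput}. Your side remark that the input-norm factor should refer to the target inputs $z_i$ rather than the $x_i$ written in the theorem statement is also a correct observation about a small notational slip in the paper.
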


\begin{theorem}
Assume we are given some fixed reference matrices $M^1, M^2, \ldots, M^{L_T}$ representing the initialized weights of the target network's filter matrices. Under the adapted Assumption~\ref{assumption}, with probability at least $1 - \delta$, for all margin $\gamma \in (0, \bgamma]$, with $\mathcal{G}_{\mathcal{A}}$ defined as in Theorem \ref{thrm:cnn}, we have: \\[5pt]
\noindent $R_T (w^*, k^*) \le 1 - \text{MPA}(\mcT | \tilde{\mcS}) + \widetilde{\mathcal{O}} \big( \frac{\mathcal{G}_{\mathcal{A}}}{\sqrt{p}}\log(\bar{W}) + \sqrt{\frac{\log(1/\delta)}{p}} \big).$
\label{thrm:cnn-diff}
\end{theorem}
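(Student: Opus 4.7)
The proof will mirror the shared-inputs CNN argument (Theorem~\ref{thrm:cnn}) exactly, with two substitutions: the target sample size $n$ is replaced by $p$, and the deterministic inequality on the empirical risk comes from Lemma~\ref{lem:diffinput} rather than Lemma~\ref{lemma:mpa}. The plan is to compose a uniform-in-$\gamma$ Rademacher-style generalization bound for CNNs from the target side with the MPA-based upper bound on the target empirical margin risk from the dummy-source construction.

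First, I would invoke Theorem~3 of~\cite{ledent2021norm}, applied to the target distribution $\mathbb{P}_{Z,T}$ and the target sample $\mcT = \{(z_i,t_i)\}_{i=1}^{p}$, which is iid by assumption. Since $w^*$ is obtained independently using the source data $\mcS$ and is then frozen, the target CNN $k^* \newcirc w^*$ depends on $\mcT$ only through the training of $k^*$, so the high-probability bound of~\cite{ledent2021norm} applies: with probability at least $1-\delta$, for every $\gamma > 0$,
\begin{equation*}
R_T(w^*, k^*) \le \widehat{R}_{\mcT,\gamma}(w^*, k^*) + \widetilde{\mathcal{O}}\Bigl( \frac{\mathcal{G}_{\mathcal{A}}}{\sqrt{p}}\log(\bar{W}) + \sqrt{\frac{\log(1/\delta)}{p}} \Bigr).
\end{equation*}
Here the capacity term $\mathcal{G}_{\mathcal{A}}$ is the one defined in Theorem~\ref{thrm:cnn}, instantiated at the learned filter matrices $\mathcal{A} = (A^{*1}, \ldots, A^{*L_T})$ of the target network.

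Next, I would apply Lemma~\ref{lem:diffinput} to replace $\widehat{R}_{\mcT,\gamma}(w^*, k^*)$ by $1 - \text{MPA}(\mcT \mid \tilde{\mcS})$ whenever $\gamma \in [0, \bgamma]$. Restricting to $\gamma \in (0, \bgamma]$ so that both the CNN bound and Lemma~\ref{lem:diffinput} are simultaneously in force, and adding the two inequalities, yields the claimed bound. No further probabilistic step is needed, because Lemma~\ref{lem:diffinput} is a pointwise (deterministic) inequality, so the only randomness enters through the generalization inequality above, and thus the $1-\delta$ confidence is inherited directly.

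The only subtlety worth double-checking is that the dummy source dataset $\tilde{\mcS} = \{(z_i, h^* \newcirc w^*(z_i))\}_{i=1}^{p}$ is constructed from $(w^*, h^*)$, which are trained on $\mcS$ and therefore independent of $\mcT$ conditional on $\mcS$; this is exactly what makes $k^*$ and $f_{\text{mp}} \newcirc h^*$ legitimate hypotheses for the generalization bound applied on $\mcT$, and it also ensures that the adapted Assumption~\ref{assumption} is well-posed for the pair $(\tilde{\mcS}, \mcT)$. The main (minor) obstacle is therefore bookkeeping: making sure that Theorem~3 of~\cite{ledent2021norm} is quoted with the correct sample size $p$ and with $\mathcal{G}_{\mathcal{A}}$ defined over the target network's filter matrices as in Theorem~\ref{thrm:cnn}, after which the result follows by direct substitution.
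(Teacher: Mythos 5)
Your proposal is correct and follows essentially the same route as the paper: invoke Theorem~3 of~\cite{ledent2021norm} on the target sample of size $p$ and then substitute the bound $\widehat{R}_{\mcT,\gamma}(w^*,k^*) \le 1 - \text{MPA}(\mcT \mid \tilde{\mcS})$ from Lemma~\ref{lem:diffinput}. Your added remarks on the independence of $(w^*,h^*)$ from $\mcT$ are a reasonable sanity check that the paper leaves implicit.
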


\section{Discussions}

The technique used to prove our theorems is general and can be combined with other generalization bounds for deep neural networks. Although we proved our results using the norm-based bounds of~\cite{ledent2021norm}, we emphasize that our proof technique can also be used with other generalization bounds for neural networks, such as those of~\cite{bartlett2017spectrally}.

The bounds in Theorems~\ref{thrm:fcnn} and~\ref{thrm:cnn} depend on both the optimal source empirical risk $\widehat{R}_{\mcS} (w^*, h^*)$ and $\text{MPA}(\mcT | \mcS)$. These bounds get better when $\widehat{R}_{\mcS} (w^*, h^*) \rightarrow 0$ and $\text{MPA}(\mcT | \mcS) \rightarrow 1$. The bounds in Theorems~\ref{thrm:fcnn-diff} and~\ref{thrm:cnn-diff} do not contain the source empirical risk, since it has been indirectly measured in $\text{MPA}(\mcT | \tilde{\mcS})$ when we use $h^* \rightcirc w^*$ to construct $\tilde{\mcS}$.

From our results, we can see that $\text{MPA}(\mcT | \mcS)$ (or $\text{MPA}(\mcT | \tilde{\mcS})$ for the setting with different inputs) can be used as a transferability measure. Specifically, for well-trained and deep enough neural networks, it has been observed empirically~\cite{zhang2017understanding} that ${ \widehat{R}_{\mcS} (w^*, h^*) \approx 0 }$. Furthermore, the $\widetilde{\mathcal{O}}(\cdot)$ terms in our theorems are near $0$ for large enough $n$. In this case, our results imply that $\text{MPA}(\mcT | \mcS) \lessapprox 1 - R_T (w^*, k^*)$ or ${ \text{MPA}(\mcT | \tilde{\mcS}) \lessapprox 1 - R_T (w^*, k^*) }$. This means that $\text{MPA}(\mcT | \mcS)$ or $\text{MPA}(\mcT | \tilde{\mcS})$ lower bounds the expected accuracy of the transferred model $k^* \rightcirc w^*$, and thus can be used as a transferability measure. We now validate this observation empirically.

\section{Experiments}
\label{sec:experiment}

We show the usefulness of our theoretical bounds in practice by empirically illustrating the ability of MPA as a transferability measure on the large-scale Caltech-UCSD Birds-200 dataset~\cite{WelinderEtal2010}, which contains 11,788 images of 200 bird species labeled with 312 binary attributes. We keep the train-test split as provided in dataset, with 5,994 train images and 5,794 test images. We pick 4 attributes \emph{Curved Bill}, \emph{Iridescent Wings}, \emph{Brown Upper Parts} and \emph{Olive Under Parts} for training source models, and randomly choose 100 different attributes as target tasks. Regarding the model architecture, we use ResNet18~\cite{he2016deep} without the last fully connected layer as the feature extractor $w$. In all tests, we train our source model $h^* \rightcirc w^*$ and the transferred model $k^* \rightcirc w^*$ using the cross-entropy loss with batch size 32 and run the stochastic gradient descent optimizer with momentum for 40 epochs. The initial learning rate is set at 0.01 and is divided by 10 every 10 epochs.

Following the settings in~\cite{tran2019transferability, nguyen2020leep}, we estimate the correlations between the MPA scores and the actual test accuracies of the transferred models to evaluate the relationship between these two quantities. High correlations mean the MPA score is a good measure for comparing test accuracies of the transferred models, and thus is a good transferability measure. For the 4 source tasks above with 100 randomly chosen target tasks, our experimental results give the following Pearson correlation coefficients: 0.9534 (Curved Bill), 0.9452 (Iridescent Wings), 0.9484 (Brown Upper Parts), and 0.9611 (Olive Under Parts). These coefficients show that the MPA scores and the test accuracies are highly positive correlated with statistical significance ($p < 10^{-4}$), which clearly indicates that the MPA is a reliable transferability measure for estimating the performance of transferred models.

\section{Conclusion}

We proved novel generalization bounds for transfer learning of deep neural networks using a new quantity, the majority predictor accuracy, that can be computed easily and efficiently from data. We showed the usefulness of our bounds in practice by demonstrating that the majority predictor accuracy can be used for estimating the effectiveness of deep transfer learning. Our theory can potentially be extended to analyze more complex transfer scenarios such as continual learning~\cite{nguyen2019toward}.

\bibliographystyle{plain}
\bibliography{isita22}

\end{document}